\documentclass[journal,10pt]{IEEEtran}
\usepackage[T1]{fontenc}
\usepackage[utf8]{inputenc}
\usepackage{amsmath,amssymb,amsfonts,bm}
\usepackage{graphicx,booktabs,array,tabularx}
\usepackage{xcolor,cite,url,microtype,flushend,flafter}
\usepackage{etoolbox}
\apptocmd{\thebibliography}{\interlinepenalty=10000\relax}{}{}
\usepackage{tikz}
\usetikzlibrary{arrows.meta,positioning,calc,fit,backgrounds}
\usepackage[hidelinks]{hyperref}
\definecolor{navy}{HTML}{233F5B}
\definecolor{teal}{HTML}{007D80}
\definecolor{disturbance}{HTML}{F18B21}
\newcommand{\R}{\mathbb R}
\newcommand{\SO}{\mathrm{SO}(3)}
\newcommand{\diag}{\operatorname{diag}}
\newcommand{\norm}[1]{\left\|#1\right\|}
\newcommand{\sk}[1]{\widehat{#1}}
\newcommand{\Wset}{\mathcal W}
\newcommand{\Eset}{\mathcal E_w}
\newcommand{\Uset}{\mathcal U_f}
\newcommand{\Vset}{\mathcal V}

\newtheorem{theorem}{Theorem}
\newenvironment{proof}{\noindent\emph{Proof:}}{\hfill$\square$\par}
\title{CBF-Based Control Allocation for\\Omnidirectional Tiltrotor Aerial Robots}
\author{Seobeom~Lee and Hyondong~Oh%
\thanks{Seobeom Lee and Hyondong Oh are with the Department of Mechanical Engineering,
Korea Advanced Institute of Science and Technology (KAIST),
Daejeon, Republic of Korea (e-mail: seobeomlee@kaist.ac.kr; h.oh@kaist.ac.kr).}%
\thanks{Corresponding author: Hyondong Oh (e-mail: h.oh@kaist.ac.kr).}}
\hypersetup{pdfauthor={Seobeom Lee; Hyondong Oh},pdftitle={CBF-Based Control Allocation for Omnidirectional Tiltrotor Aerial Robots}}
\begin{document}
\maketitle
\begin{abstract}
Actively tilting aerial robots can generate six-dimensional wrenches, but slow servo response can make a desired wrench infeasible during reconfiguration. This paper develops a control barrier function (CBF) allocator that preserves wrench feasibility under bounded servo-configuration mismatch. An exact equivalent-wrench identity connects the thrust required at nominal and actual configurations. For a nonsingular six-channel allocation map, inverse allocation reduces robust wrench containment to twelve thrust margins tightened by an ellipsoidal reserve. Their complete derivatives yield a six-variable servo-rate quadratic program, followed by differential thrust allocation for tracking. A conditional forward-invariance result establishes actual-configuration feasibility under reserve coverage, feasible initialization, and admissible CBF inputs. A complementary MuJoCo study evaluates a measured-angle implementation against geometric, augmented differential, and normalized differential allocation in circular and cartwheel maneuvers, with and without disturbances. The proposed implementation achieves lower position error in all four recorded scenarios, with an attitude tradeoff in circular tracking. During the disturbed cartwheel, the desired wrench is feasible at all 1751 stored samples of continuous rotation; baseline violation rates range from 60.8\% to 100\%. These comparisons evaluate sampled feasibility; validation of the nominal-trajectory certificate requires nominal-state and reserve-coverage records.
\end{abstract}
\begin{IEEEkeywords}
Aerial systems, control allocation, control barrier functions, omnidirectional robots, actuator constraints.
\end{IEEEkeywords}
\section{Introduction}
\IEEEPARstart{O}{mnidirectional} aerial robots can regulate position and attitude independently, enabling contact inspection, tool alignment, and six-dimensional physical interaction~\cite{ollero2022,bodie2021,ryll2019}. Fixed tilted-rotor designs obtain this capability through their geometry~\cite{brescianini2016,park2018}, whereas actively tilting platforms can reorient their thrust directions during flight~\cite{ryll2015,allenspach2020}. In the latter case, control allocation must coordinate fast thrust changes with slower servo motion.

Servo transients can make a desired wrench infeasible even when the target pose is kinematically admissible. A force and moment achievable after reorientation may lie outside the thrust-limited set at the current tilt angles. Aggressive motion therefore requires an allocator that preserves wrench authority while both the request and the actuation geometry evolve.

Wrench tracking and instantaneous feasibility are distinct objectives. Tracking reduces the difference between the desired and generated wrench; feasibility requires an admissible thrust vector at the current rotor directions. Servo motion changes the attainable wrench set. We constrain that motion to retain the requested wrench together with a neighborhood accounting for nominal-to-actual configuration mismatch.

Geometric and differential allocators provide direct actuator setpoints, coordinated reconfiguration, and limit handling~\cite{bodie2020,allenspach2020,johansen2013}. Augmented and normalized differential allocation further connect wrench controllers to actuator-rate commands and account for heterogeneous limits~\cite{cuniato2026}. The remaining question addressed here is how to constrain servo reconfiguration so that a specified wrench request remains inside the instantaneous thrust-limited set despite configuration mismatch.

Control barrier functions (CBFs) express this objective as constraints on servo motion~\cite{ames2017,ames2019}. For a nonsingular six-channel configuration, inverse allocation gives six lower and six upper thrust margins. An ellipsoidal wrench reserve tightens them against nominal-to-actual configuration mismatch. Their complete derivatives, including the changing reserve, define a servo-rate quadratic program (QP); a downstream differential allocator commands thrust rates for tracking.

The contributions are threefold:
\begin{enumerate}
\item An exact nominal-to-actual required-thrust identity and a support-based reduction of configuration-robust wrench containment to twelve scalar margins.
\item An explicit servo-rate CBF-QP and a conditional invariance proof connecting nominal margins to desired-wrench feasibility at the actual servo angles, without assuming identical commanded and actual servo velocities.
\item A telemetry-based comparison of a measured-angle CBF variant with geometric allocation (GA), augmented differential allocation (ADA), and normalized differential allocation (NDA) across four scenarios. Recorded force-space views and six-dimensional margins relate tracking performance to instantaneous wrench feasibility.
\end{enumerate}
The guarantee concerns existence of an admissible thrust vector; thrust realization and pose tracking also depend on actuator and closed-loop dynamics.

Sections~\ref{sec:model}--\ref{sec:baselines} establish the model and baseline allocators. Sections~\ref{sec:proposed}--\ref{sec:guarantee} develop the CBF formulation and guarantee. Sections~\ref{sec:simulation}--\ref{sec:conclusion} present the simulation comparison, design implications, and conclusions.

\section{Related Work}
\subsection{Actuation Geometry and Dynamic Allocation}
The attainable wrench set depends on both vehicle design and rotor orientation. Offline tilt optimization improves task-dependent authority~\cite{rajappa2015}, synchronized mechanisms change the flight configuration~\cite{ryll2022}, and independently tilting platforms permit online redistribution of actuator effort~\cite{ryll2015,allenspach2020}. These architectures offer different allocation freedoms. Here, six independently controlled thrust magnitudes form a square wrench map when the six tilt angles are fixed.

Constrained allocation has a substantial history. Active-set methods handle bounded least squares~\cite{harkegard2002}, dynamic QPs shape actuator transients~\cite{harkegard2004}, and null-space methods enforce hard joint limits~\cite{flacco2015}. Dual-layer allocation can also modify the requested virtual control when actuator constraints prevent its realization~\cite{yan2021}. A recent predictive allocator accounts for asymmetric motor dynamics by optimizing null-space motion over a horizon~\cite{pretto2026}. Our method instead constrains the servo motion to preserve a specified neighborhood of the desired wrench inside the attainable set.

For the quantitative comparison, we use geometric allocation (GA), augmented differential allocation (ADA), and normalized differential allocation (NDA). Section~\ref{sec:baselines} gives their essential equations and the differential-allocation (DA) foundation. Power differential allocation (PDA) additionally uses speed-dependent propeller-acceleration envelopes~\cite{cuniato2026}. A calibrated envelope is unavailable for the present propulsion model, so PDA is excluded from this benchmark.

A differential allocator uses twelve effective actuator coordinates because servo motion and propeller acceleration both contribute to wrench rate. Instantaneous feasibility instead concerns the six thrust channels at fixed angles. A direction available through coordinated motion may therefore be unavailable through thrust changes alone. The proposed barrier constrains this fixed-configuration set while retaining a differential tracking stage.

\subsection{Barrier Functions and Actuator Feasibility}
CBFs translate set invariance into control inequalities~\cite{ames2017,ames2019}. Robust, input-to-state-safe, and measurement-robust formulations address uncertainty in the dynamics, actuation, and state information~\cite{xu2015,kolathaya2019,cosner2021}. Integral and high-order CBFs accommodate dynamically extended inputs and higher-relative-degree constraints~\cite{ames2021,xiao2022}. Safety layers have also been studied for aerial physical interaction~\cite{cuniato2022} and reaction-wheel allocation~\cite{yang2025}.

Readiness barriers preserve a lower bound on motor wrench-rate authority through a torque-level allocation QP~\cite{silano2026}. More directly, Silano and Saska~\cite{silanosaska2026} use active tilt to preserve a motor-only wrench-rate readiness metric, enforced through motor torques and servo setpoints. Their certificate concerns a lower bound on the log-volume of an attainable wrench-rate approximation. Our constraint follows a specified wrench request: the request and its uncertainty neighborhood must fit inside the instantaneous thrust-limited wrench set. The square inverse map makes this containment test exact, and the equivalent-wrench identity transfers nominal margins to actual-angle feasibility. The distinction is the certified quantity: a desired-wrench neighborhood in the thrust-limited set, rather than a volume measure of motor wrench-rate authority.

The certificate is continuous time and conditional on admissible CBF inputs. Sampled-data and delayed-input formulations~\cite{breeden2022,singletary2020} address implementation effects, while command governors and predictive safety filters~\cite{garone2017,wabersich2021} can modify a request that exceeds the available authority. These are complementary extensions to the allocation-level result established here.

The present certificate concerns allocation feasibility; outer-loop stability and task-level safety remain separate properties.

\section{System Model and Allocation Objective}\label{sec:model}
\subsection{Rigid-Body Dynamics and Tilt Geometry}
Let $p,v\in\R^3$ denote inertial position and velocity, $R\in\SO$ the body-to-inertial rotation, and $\Omega\in\R^3$ the body angular velocity. With mass $m$, inertia $J_b\succ0$, and an upward inertial vertical axis,
\begin{align}
\dot p&=v,&m\dot v&=-mg e_3+RF+d_F,\label{eq:translation}\\
\dot R&=R\sk\Omega,&J_b\dot\Omega&=\tau-\Omega\times J_b\Omega+d_\tau.
\label{eq:rotation}
\end{align}
The allocator produces the body-frame wrench $w=[F^\top,\tau^\top]^\top$. The external force $d_F$ is expressed in inertial coordinates and the external moment $d_\tau$ in body coordinates. Throughout, $\sk x y=x\times y$, and $(\cdot)^\vee$ is the inverse of the hat map.

For thrust channel $i$, let $r_i$ be its body-frame position, $t_i$ a unit tangential direction orthogonal to $e_3$, $s_i\in\{-1,1\}$ its spin sign, and $\kappa_i$ its reaction-torque coefficient. Define
\begin{align}
d_i(\alpha_i)&=\cos\alpha_i\,e_3+\sin\alpha_i\,t_i,\label{eq:direction}\\
a_i(\alpha_i)&=\begin{bmatrix}d_i(\alpha_i)\\r_i\times d_i(\alpha_i)+s_i\kappa_i d_i(\alpha_i)\end{bmatrix},\label{eq:column}\\
w&=A(\alpha)f,\quad A(\alpha)=[a_1(\alpha_1)\ \cdots\ a_6(\alpha_6)].\label{eq:allocation}
\end{align}
The vectors $\alpha,f\in\R^6$ contain tilt angles and thrust magnitudes. Propeller speed $\omega_i$ is related to thrust by $f_i=k_{T,i}\omega_i^2$ for a positive thrust coefficient $k_{T,i}$. Identical co-tilting propellers with the same unit-thrust wrench column can be grouped into one channel by summing their thrusts and bounds. For a pair driven by a common speed command, $k_{T,i}$ includes both propellers. This reduction preserves the instantaneous attainable wrench set.

The allocator uses body-frame wrenches. Inertial force requests must therefore be rotated before applying~\eqref{eq:allocation}, and their derivatives must include the rotating-frame term given in Appendix~\ref{app:derivative}.

\subsection{Instantaneous Wrench Feasibility}
For constant thrust bounds $0\le\underline f<\overline f$, define
\begin{align}
\Uset&=\{f:\underline f\le f\le\overline f\},&
\Wset(\alpha)&=A(\alpha)\Uset.\label{eq:wrenchset}
\end{align}
On an open configuration domain $D\subset\R^6$ where $A(\alpha)$ is nonsingular, let $C(\alpha)=A(\alpha)^{-1}$. Then
\begin{equation}
w_d\in\Wset(\alpha)
\ \Longleftrightarrow\ 
\underline f\le C(\alpha)w_d\le\overline f.
\label{eq:exactfeasible}
\end{equation}
Although the reduced allocation model has twelve actuator coordinates, only the six channel thrusts remain free at fixed angles. Equation~\eqref{eq:exactfeasible} applies only where this square map remains nonsingular; all-vertical rotor directions, for example, cannot provide arbitrary six-dimensional instantaneous wrench.

On $D$, the thrust box maps to a full-dimensional parallelotope with twelve facets, each represented by a row of $C(\alpha)$ and a thrust bound. The inverse-map test gives every feasibility inequality without sampling wrench directions or constructing vertices. This exact reduction relies on the square nonsingular map; a redundant fixed-configuration map requires a different treatment of internal thrust freedom.

\subsection{Nominal and Actual Servo Configurations}
Let $\alpha_n$ be the nominal servo trajectory and $\alpha_r$ the actual configuration. Their difference is evaluated in consistently unwrapped angular coordinates. The proposed allocator chooses
\begin{equation}
\dot\alpha_n=\nu,\qquad \nu\in\Vset,
\label{eq:nominalpath}
\end{equation}
where $\Vset=\{\nu:\underline\nu\le\nu\le\overline\nu\}$ is the admissible nominal servo-rate set. Servo lag permits $\alpha_r\ne\alpha_n$ and $\dot\alpha_r\ne\nu$. Our objective is
\begin{equation}
w_d(t)\in\Wset(\alpha_r(t)).\label{eq:physicalgoal}
\end{equation}
This requires a bounded thrust solution at the actual angles. It does not assume that the actual thrust $f_r$ instantaneously equals that solution.

We distinguish the nominal required thrust $f_n=C(\alpha_n)w_d$, the actual-configuration requirement $C(\alpha_r)w_d$, the generated thrust $f_r$, and the command $f_c$. They enter the barrier, feasibility test, tracking feedback, and command integrator, respectively.

\section{From Geometric to Differential Allocation}\label{sec:baselines}
We express the baseline methods in the notation of Section~\ref{sec:model}, distinguishing actuator endpoint selection, differential tracking, and rate-limit enforcement. Table~\ref{tab:methods} summarizes their roles and the additional containment objective addressed by the proposed CBF-based control allocation (CA) method.

\begin{table*}[t]
\caption{Allocation methods and the scope of their constraints}
\label{tab:methods}
\centering\footnotesize
\renewcommand{\arraystretch}{1.2}
\begin{tabularx}{\textwidth}{@{}p{.13\textwidth}>{\raggedright\arraybackslash}p{.31\textwidth}>{\raggedright\arraybackslash}X@{}}
\toprule
Method & Allocation mechanism & Constraint or feasibility property\\
\midrule
GA~\cite{bodie2020} & Virtual-force inversion; thrust and tilt setpoint recovery & Selects an endpoint without constraining the wrench set along the servo transient.\\
DA~\cite{allenspach2020} & Weighted inverse with secondary null-space motion & Matches a differential wrench-rate target; imposes no explicit thrust-margin constraint.\\
ADA~\cite{cuniato2026} & Wrench-error feedback as the differential target & Uses actuator-state feedback for tracking; adds no wrench-containment constraint.\\
NDA~\cite{cuniato2026} & Rate normalization, common scaling, and first-order setpoints & Enforces the modeled rate box; scaling may sacrifice exact wrench-rate tracking.\\
PDA~\cite{cuniato2026} & Propulsion-dependent acceleration bounds & Incorporates speed-dependent power and acceleration limits; excluded from this benchmark.\\
Proposed\newline CBF-based CA & Servo-rate QP~\eqref{eq:qp} and differential thrust tracking & Preserves $w_d+\Eset\subseteq\Wset(\alpha_n)$; Theorem~\ref{thm:main} transfers its margins to the actual required thrust.\\
\bottomrule
\end{tabularx}
\par\smallskip
\begin{minipage}{\textwidth}\footnotesize
The proposed-method row states the full theoretical objective. The benchmark in Section~\ref{sec:simulation} uses its measured-angle prototype.
\end{minipage}
\end{table*}

\subsection{Geometric Allocation}
Write $a_i(\alpha_i)=a_{z,i}\cos\alpha_i+a_{t,i}\sin\alpha_i$, where $a_{z,i}$ and $a_{t,i}$ follow from~\eqref{eq:column} using $e_3$ and $t_i$, respectively. Introduce
\begin{align}
z_i&=\begin{bmatrix}f_i\cos\alpha_i&f_i\sin\alpha_i\end{bmatrix}^{\top},\nonumber\\
w&=\mathcal A z,\qquad
\mathcal A=[a_{z,1}\ a_{t,1}\ \cdots\ a_{z,6}\ a_{t,6}].
\end{align}
The basic minimum-norm GA solution~\cite{bodie2020} is
\begin{align}
z^*&=\mathcal A^\dagger w_d,\label{eq:ga}\\
f_i^*&=\sqrt{(z_{i,1}^*)^2+(z_{i,2}^*)^2},&
\alpha_i^*&=\operatorname{atan2}(z_{i,2}^*,z_{i,1}^*).\nonumber
\end{align}
This unconstrained solution gives a desired endpoint when $\mathcal A$ has full row rank; thrust and angle limits require separate handling. It does not constrain the wrench set encountered during servo motion. Angle recovery also needs a convention when $f_i^*$ approaches zero.

\subsection{Differential and Augmented Differential Allocation}
To coordinate actuator motion, DA~\cite{allenspach2020} uses $q=[\alpha^\top,\omega^\top]^\top$ and differentiates~\eqref{eq:allocation}:
\begin{align}
\dot w&=J(q)\dot q,\qquad J=[J_\alpha\ J_\omega],\label{eq:da_map}\\
(J_\alpha)_{:,i}&=a_i'(\alpha_i)f_i,\nonumber\\
J_\omega&=A(\alpha)\diag(2k_{T,i}\omega_i).\nonumber
\end{align}
Here $a_i'=\partial a_i/\partial\alpha_i$. For a selected wrench-rate target $v_w$, a weighted solution is
\begin{align}
\dot q&=J_W^\dagger v_w+(I-J_W^\dagger J)\dot q_0,\label{eq:da}\\
J_W^\dagger&=W^{-1}J^\top(JW^{-1}J^\top)^{-1},\quad W\succ0.\nonumber
\end{align}
The expression assumes full row rank of $J$. The null-space command $\dot q_0$ supports secondary objectives, while $W$ balances servo rates and propeller accelerations.

ADA~\cite{cuniato2026} supplies the wrench-controller interface
\begin{equation}
v_w=K_w\bigl(w_d-A(\alpha_r)f_r\bigr),\qquad K_w\succ0.
\label{eq:augmentation}
\end{equation}
Equation~\eqref{eq:da} then allocates an actuator-feedback wrench error without direct body-acceleration feedback. The virtual target $v_w$ is distinct from the total command derivative $\dot w_d$, which will enter the CBF.

The particular solution $J_W^\dagger v_w$ has minimum $W$-weighted norm. Null-space motion preserves $J\dot q=v_w$, but individual actuator saturation can destroy this equality.

For an ideal differential realization of ADA, the wrench error $e_w=w_d-w$ obeys
\begin{equation}
\dot e_w=\dot w_d-K_we_w.
\label{eq:adaerror}
\end{equation}
Thus~\eqref{eq:augmentation} gives exponential decay for a constant request. Feedforward $\dot w_d$ cancels the driving term for a changing request in the ideal model. Independently of that tracking choice, a barrier depending on $w_d$ requires its true total derivative.

\subsection{Normalized Differential Allocation}
NDA~\cite{cuniato2026} replaces manual rate weighting with actuator-limit normalization. For rate bounds $\underline{\dot q}<\overline{\dot q}$, define
\begin{equation}
\dot q_c=\tfrac12(\overline{\dot q}+\underline{\dot q}),\qquad
D_q=\tfrac12\diag(\overline{\dot q}-\underline{\dot q}).
\end{equation}
Using $\dot q=\dot q_c+D_qy$ and $J_N=JD_q$ gives
\begin{align}
y^*&=J_N^\dagger(v_w-J\dot q_c)+(I-J_N^\dagger J_N)y_0,\label{eq:nda}\\
\dot q&=\dot q_c+D_q\frac{y^*}{\max(1,\norm{y^*}_\infty)}.\label{eq:nda_sat}
\end{align}
For diagonal time constants $T_a\succ0$, the first-order model $\dot q=T_a^{-1}(q_{\mathrm{cmd}}-q)$ gives the setpoint $q_{\mathrm{cmd}}=q+T_a\dot q$.

With full-row-rank $J_N$ and $s=\max(1,\norm{y^*}_\infty)$,
\begin{equation*}
J\dot q=s^{-1}v_w+(1-s^{-1})J\dot q_c.
\end{equation*}
Thus $s=1$ gives exact allocation. Scaling enforces the rate box, yielding $v_w/s$ for symmetric bounds and an offset term for asymmetric bounds.

The CBF stage below adds instantaneous wrench-set constraints to rate-based tracking.

\section{CBF-Based Wrench-Feasibility Allocation}\label{sec:proposed}
The proposed allocator separates two functions (Fig.~\ref{fig:architecture}): the servo stage preserves wrench feasibility, and the thrust stage tracks the requested wrench. We construct a sufficient nominal containment condition for actual-angle feasibility, reduce it exactly to twelve thrust margins, and derive affine constraints on the servo rate.
\begin{figure*}[t]
\centering
\begin{tikzpicture}[font=\small,>=Latex,
 outerbox/.style={draw=navy!35,line width=.65pt,rounded corners=3pt,
   align=center,fill=navy!3,text width=2.70cm,minimum height=1.30cm,inner sep=5pt},
 stage/.style={draw=teal!65,line width=.8pt,rounded corners=3pt,
   align=center,fill=white,text width=4.05cm,minimum height=1.28cm,inner sep=5pt},
 flow/.style={->,draw=navy,line width=.9pt},
 feedback/.style={->,draw=navy!65,line width=.7pt,dash pattern=on 3pt off 2pt},
 signal/.style={fill=white,inner sep=1.5pt,font=\small}]
\node[outerbox] (outer) at (0,0)
 {\textbf{Wrench controller}\\[3pt]Pose reference + state};
\node[stage] (servo) at (5.45,1.04)
 {\textcolor{teal}{\footnotesize\textbf{FEASIBILITY}}\\[2pt]
  \textbf{Servo CBF-QP}~\eqref{eq:qp}\\[2pt]
  $\dot\alpha_n=\nu^*$};
\node[stage,draw=navy!45] (thrust) at (5.45,-1.02)
 {\textcolor{navy}{\footnotesize\textbf{TRACKING}}\\[2pt]
  \textbf{Differential thrust QP}~\eqref{eq:thrustqp}\\[2pt]
  $v_w\;\longrightarrow\;u_f^*\;\longrightarrow\;f_c$};
\node[outerbox] (plant) at (11.15,0)
 {\textbf{Actuators + vehicle}\\[3pt]Servo and thrust dynamics};
\begin{scope}[on background layer]
\node[draw=teal!25,line width=.7pt,rounded corners=5pt,fill=teal!2,
 fit=(servo)(thrust),inner xsep=10pt,inner ysep=9pt] (allocator) {};
\end{scope}
\node[font=\small\bfseries,text=teal,above=5pt of allocator]
 {Proposed CBF-based CA};
\draw[flow] ([yshift=9pt]outer.east) -- ++(.65,0)
 |- node[pos=.76,above,signal] {$w_d,\ \dot w_d$} (servo.west);
\draw[flow] ([yshift=-9pt]outer.east) -- ++(.65,0)
 |- node[pos=.76,above,signal] {$w_d$} (thrust.west);
\draw[flow] (servo.east) -- node[above,signal] {$\alpha_n$}
 ++(2.21,0) -| (plant.north);
\draw[flow] (thrust.east) -- node[above,signal] {$f_c$}
 ++(.83,0) |- ([yshift=-9pt]plant.west);
\coordinate (feedbackbus) at ($(plant.south)+(0,-1.71)$);
\draw[feedback] (plant.south) -- (feedbackbus) -| (outer.south);
\draw[feedback] (thrust.south |- feedbackbus) -- (thrust.south);
\fill[navy!65] (thrust.south |- feedbackbus) circle (1.4pt);
\node[font=\footnotesize,text=navy!80,fill=white,inner sep=2pt] at ($(feedbackbus)+(-9.40,0)$) {vehicle state};
\node[font=\footnotesize,text=navy!80,fill=white,inner sep=2pt] at ($(feedbackbus)+(-3.12,0)$)
 {$\alpha_r,\ f_r,\ \widehat{\dot\alpha}_r$};
\end{tikzpicture}
\caption{Proposed CBF-based CA. The servo QP controls feasibility through the nominal trajectory $\alpha_n$; the thrust QP uses actual actuator feedback for tracking. Solid arrows denote commands and dashed arrows measured or estimated feedback. The derivative $\dot w_d$ enters the barrier constraints, while $v_w$ is formed from~\eqref{eq:augmentation}.}
\label{fig:architecture}
\end{figure*}

\subsection{Robust Allocation Objective}
The physical objective is $w_d\in\Wset(\alpha_r)$, whereas the servo decision acts on the nominal trajectory $\dot\alpha_n=\nu$. To connect these two configurations, let $A_n=A(\alpha_n)$, $A_r=A(\alpha_r)$, $C_n=A_n^{-1}$, and $C_r=A_r^{-1}$. Define the equivalent wrench perturbation induced by configuration mismatch
\begin{equation}
\delta w_s=(A_n-A_r)C_rw_d.
\label{eq:mismatch}
\end{equation}
It gives the exact nominal-to-actual relation
\begin{equation}
C_rw_d=C_n(w_d+\delta w_s).
\label{eq:transfer}
\end{equation}
The same vector $C_rw_d$ produces $w_d$ through $A_r$ and $w_d+\delta w_s$ through $A_n$. Equation~\eqref{eq:transfer} compares the two configurations without linearizing their angle mismatch or equating their servo velocities.

Choose a fixed ellipsoid that bounds this equivalent error over the operating domain:
\begin{equation}
\Eset=\{\delta:\delta^\top Q_w^{-1}\delta\le1\},\qquad Q_w\succ0.
\label{eq:ellipsoid}
\end{equation}
For example, $Q_w=\diag(r_F^2I_3,r_\tau^2I_3)$ sets separate force and torque radii. The robust nominal objective is
\begin{equation}
\boxed{\quad w_d(t)+\Eset\subseteq\Wset(\alpha_n(t)).\quad}
\label{eq:robustobjective}
\end{equation}
With $\delta w_s\in\Eset$, this containment makes $C_rw_d$ an admissible thrust vector by~\eqref{eq:transfer}. The servo stage must maintain~\eqref{eq:robustobjective} as $w_d$ changes, subject to the nominal rate limits.

The perturbation $\delta w_s$ captures configuration mismatch. The tracking error $w_d-A_rf_r$ also depends on motor output and supplies feedback to the thrust stage.

\subsection{Exact Reduction to Thrust Margins}
Let $c_i^\top$ denote row $i$ of $C_n$. Define the nominal required thrust and the reserve projected onto thrust channel $i$ as
\begin{equation}
f_n=C_nw_d,\qquad \rho_i=\sqrt{c_i^\top Q_wc_i}.
\label{eq:support}
\end{equation}
Writing $\delta=Q_w^{1/2}z$ with $\norm{z}_2\le1$ yields
\begin{equation}
\sup_{\delta\in\Eset}|c_i^\top\delta|
=\norm{Q_w^{1/2}c_i}_2=\rho_i.
\label{eq:supportmaximum}
\end{equation}
The equality follows from Cauchy--Schwarz, with the positive support attained at $\delta_i^\star=Q_wc_i/\rho_i$. Since $C_n$ is nonsingular and $Q_w\succ0$, every $\rho_i$ is strictly positive. Hence all perturbed wrenches $w_d+\delta$ can be produced within the thrust bounds exactly when
\begin{equation}
\underline f_i+\rho_i\le f_{n,i}\le\overline f_i-\rho_i,
\qquad i=1,\ldots,6.
\label{eq:tightenedinterval}
\end{equation}
The associated lower and upper margins are
\begin{equation}
B_i^-=f_{n,i}-\underline f_i-\rho_i,\qquad
B_i^+=\overline f_i-f_{n,i}-\rho_i.
\label{eq:barriers}
\end{equation}
Thus the complete six-dimensional containment is equivalent to twelve scalar inequalities:
\begin{equation}
B_i^\pm\ge0\quad\forall i
\quad\Longleftrightarrow\quad
w_d+\Eset\subseteq\Wset(\alpha_n).
\label{eq:containment}
\end{equation}
Each inequality bounds every point of the same ellipsoid. The worst-case perturbation may differ between channels; the support extrema need not be attained simultaneously.

\begin{figure}[!b]
\centering\includegraphics[width=.95\columnwidth]{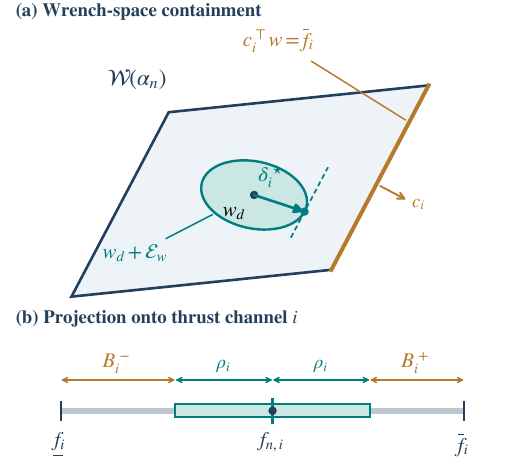}
\caption{Two-dimensional illustration of the exact reduction. (a) The neighborhood $w_d+\Eset$ lies inside $\Wset(\alpha_n)$; the support point satisfies $c_i^\top\delta_i^\star=\rho_i$. (b) Projection onto channel $i$ gives $[f_{n,i}-\rho_i,f_{n,i}+\rho_i]$. The remaining gaps to the thrust limits are $B_i^-$ and $B_i^+$.}
\label{fig:reserve}
\end{figure}
Figure~\ref{fig:reserve} connects the wrench-space construction to a scalar thrust interval. The uncertainty occupies $\rho_i$ on either side of $f_{n,i}$; the remaining gaps are the barriers $B_i^\pm$. Each $\rho_i$ has thrust units and varies with the inverse-map geometry even for fixed $Q_w$.

\subsection{Analytical Margin Dynamics}
Although containment depends nonlinearly on the servo angles, its margin derivatives are affine in the servo rate. From $C_nA_n=I$,
\begin{equation}
\frac{\partial C_n}{\partial\alpha_{n,j}}
=-C_n\frac{\partial A_n}{\partial\alpha_{n,j}}C_n.
\label{eq:inversegradient}
\end{equation}
Since only column $j$ of $A_n$ depends on $\alpha_{n,j}$, its derivative follows directly from $a_j'$. Define $M,P\in\R^{6\times6}$ and $b\in\R^6$ at $(\alpha_n,w_d)$ by
\begin{align}
\bar J_\alpha&=[a_1'(\alpha_{n,1})f_{n,1}\ \cdots\ a_6'(\alpha_{n,6})f_{n,6}],\nonumber\\
M&=-C_n\bar J_\alpha,\qquad b=C_n\dot w_d,\label{eq:Mb}\\
P_{ij}&=\frac{c_i^\top Q_w}{\rho_i}
\frac{\partial c_i}{\partial\alpha_{n,j}}.\label{eq:P}
\end{align}
Differentiating~\eqref{eq:support} then gives
\begin{align}
\dot f_n&=-C_n\dot A_nf_n+C_n\dot w_d=M\nu+b,\nonumber\\
\dot\rho_i&=\frac{c_i^\top Q_w\dot c_i}{\rho_i}=(P\nu)_i.
\label{eq:rhodot}
\end{align}
Stacking the six lower and upper margins as $B^-$ and $B^+$ gives
\begin{equation}
\dot B^-=(M-P)\nu+b,\qquad
\dot B^+=(-M-P)\nu-b.
\label{eq:bdot}
\end{equation}
These margin dynamics are exact; no finite-angle linearization is used.

A useful closed form makes the geometry dependence explicit. Define $D_\alpha=[a_1'\ \cdots\ a_6']$ at $\alpha_n$ and $G=C_nD_\alpha$. Equation~\eqref{eq:inversegradient} implies $\partial c_i/\partial\alpha_{n,j}=-G_{ij}c_j$, and hence
\begin{equation}
M_{ij}=-G_{ij}f_{n,j},\qquad
P_{ij}=-G_{ij}\frac{c_i^\top Q_wc_j}{\rho_i}.
\label{eq:closedcoefficients}
\end{equation}
The coefficient $G_{ij}$ couples servo $j$ to both required thrust and reserve in channel $i$. A motion can reduce $f_{n,i}$ while increasing $\rho_i$, so differentiating required thrust alone is insufficient.

The term $b=C_n\dot w_d$ accounts for motion of the requested wrench. Substituting $v_w$ for $\dot w_d$ would omit this motion when $w=w_d$, because the feedback target then vanishes. Appendix~\ref{app:derivative} derives $\dot w_d$ for representative translational and attitude controllers, including frame rotation and compensation terms.

\subsection{Servo-Rate Quadratic Program}
With the linear extended class-$\mathcal K$ choice $\gamma B_i^\pm$, $\gamma>0$, the required barrier inequalities are
\begin{equation}
\dot B_i^\pm+\gamma B_i^\pm\ge0,\qquad i=1,\ldots,6.
\label{eq:cbfcondition}
\end{equation}
These inequalities yield the complete servo QP:
\begin{equation}
\boxed{
\begin{aligned}
\nu^*=\underset{\nu\in\R^6}{\arg\min}\quad &
\tfrac12\nu^\top R_\nu\nu+
\tfrac12\norm{M\nu+b}_{R_f}^{2}\\[2pt]
\text{subject to}\quad &
(M-P)\nu+b+\gamma B^-\ge0,\\
&(-M-P)\nu-b+\gamma B^+\ge0,\\
&\underline\nu\le\nu\le\overline\nu.
\end{aligned}}
\label{eq:qp}
\end{equation}
Here $R_\nu\succ0$, $R_f\succeq0$, and $\norm{x}_{R_f}^2=x^\top R_fx$. All vector inequalities are componentwise. The twelve barrier constraints preserve the tightened thrust intervals; the rate box bounds $\dot\alpha_n$, not the actual rate $\dot\alpha_r$. The nominal command is obtained by integrating $\dot\alpha_n=\nu^*$. This trajectory is retained as the state of the certificate; replacing it by the measured angle at each update defines different barrier dynamics.

In~\eqref{eq:qp}, $b$, $M\nu$, and $P\nu$ represent requested-wrench motion, reorientation, and reserve variation, respectively.

The cost penalizes servo motion and nominal required-thrust rate. The latter regularizes reconfiguration without imposing a hard motor-acceleration limit. The Hessian $H=R_\nu+M^\top R_fM\succ0$ gives a unique minimizer whenever the feasible set is nonempty. With $R_f=0$, the QP selects the feasible servo rate with minimum $R_\nu$-weighted norm and leaves the servos stationary when zero rate is admissible.

\subsection{Boundary Behavior}
The two barrier inequalities for channel $i$ can be written as a single interval for the nominal required-thrust rate:
\begin{equation}
\dot\rho_i-\gamma B_i^-
\ \le\ \dot f_{n,i}\ \le\
-\dot\rho_i+\gamma B_i^+.
\label{eq:ratecorridor}
\end{equation}
At the lower boundary, $B_i^-=0$ requires $\dot f_{n,i}\ge\dot\rho_i$: required thrust must keep pace with a growing reserve. At the upper boundary, $B_i^+=0$ requires $\dot f_{n,i}\le-\dot\rho_i$: required thrust must decrease as the reserve grows. Holding $f_{n,i}$ constant is insufficient when reconfiguration consumes additional reserve.

Positive margins relax the interval endpoints by $\gamma B_i^\pm$. Larger $\gamma$ permits faster approach to a boundary, while $Q_w$ sets the reserved authority. Changing $\gamma$ neither enlarges the thrust box nor changes uncertainty coverage. At the boundary, the inward-pointing derivative condition is independent of $\gamma$.

A common $\nu\in\Vset$ must satisfy all six intervals; separate interval feasibility is insufficient. The QP enforces these coupled inequalities jointly. For constant $w_d$, nonnegative margins, and a rate box containing zero, $\nu=0$ is feasible and minimizes the cost. The nominal geometry can remain fixed while motor feedback addresses wrench-tracking error.

\subsection{Differential Thrust Allocation}
At the actual actuator state, the wrench-rate identity is
\begin{equation}
\dot w=J_\alpha(\alpha_r,f_r)\dot\alpha_r+A_r\dot f_r.
\label{eq:wdot}
\end{equation}
The thrust stage uses a measured or estimated actual servo rate $\widehat{\dot\alpha}_r$ and the ADA target~\eqref{eq:augmentation}. Let $u_f$ denote the thrust-command rate. Using it as a surrogate for $\dot f_r$ in the differential tracking model gives
\begin{equation}
\begin{aligned}
u_f^*=\underset{u_f}{\arg\min}\quad &
\tfrac12\norm{A_ru_f+J_\alpha\widehat{\dot\alpha}_r-v_w}_{Q_a}^2\\
&{}+\tfrac\epsilon2\norm{u_f-u_{f,0}}^2\\
\text{subject to}\quad &\underline u_{f,k}\le u_f\le\overline u_{f,k}.
\end{aligned}
\label{eq:thrustqp}
\end{equation}
Here $Q_a\succ0$, $\epsilon>0$, and $u_{f,0}$ is a chosen rate reference. The rate bounds incorporate the thrust-command interval:
\begin{align}
\underline u_{f,k}&=\max\left(\underline u_f,\frac{\underline f-f_{c,k}}{\Delta t}\right),\nonumber\\
\overline u_{f,k}&=\min\left(\overline u_f,\frac{\overline f-f_{c,k}}{\Delta t}\right),\label{eq:thrustbounds}\\
f_{c,k+1}&=f_{c,k}+\Delta t\,u_f^*.\nonumber
\end{align}
Here $k$ indexes command updates, $\Delta t>0$ is their period, and the maximum and minimum act componentwise. For $f_{c,0}\in\Uset$ and $\underline u_f\le0\le\overline u_f$, zero rate remains admissible and~\eqref{eq:thrustbounds} keeps $f_c$ inside $\Uset$.

Without active bounds or regularization, the solution reduces to $u_f=C_r(v_w-J_\alpha\widehat{\dot\alpha}_r)$. The objective uses a command-rate approximation; motor dynamics determine how $f_r$ follows $f_c$, so $u_f=\dot f_r$ is not an assumed physical identity. A feedforward term $\dot w_d$ may be added to $v_w$ for tracking, independently of its mandatory role in the barrier derivative.

The measured or estimated servo-rate term accounts for wrench change already caused by reorientation. Thus, thrust tracking uses actual actuator feedback, while the nominal trajectory defines the robust certificate proved next.

\section{Wrench-Feasibility Guarantee}\label{sec:guarantee}
\subsection{Conditional Forward Invariance}
The certificate set is $\mathcal C=\{(\alpha_n,w_d)\in D\times\R^6:B^-\ge0,\ B^+\ge0\}$. Let $\alpha_n$, $\alpha_r$, and $w_d$ be locally absolutely continuous on $I=[t_0,T)$, with both angle trajectories in $D$ and $\dot\alpha_n=\nu$ almost everywhere. Assume $A\in C^1(D;\R^{6\times6})$ is nonsingular.
\begin{samepage}
\begin{theorem}\label{thm:main}
Suppose (i) $B_i^\pm(t_0)\ge0$ for all $i$; (ii) $\delta w_s(t)\in\Eset$ on $I$; and (iii) the selected locally integrable command satisfies the hard constraints of~\eqref{eq:qp} almost everywhere, using the total derivative $\dot w_d$. Then
\begin{align}
B_i^\pm(t)&\ge e^{-\gamma(t-t_0)}B_i^\pm(t_0)\ge0,\label{eq:invariance}\\
\underline f+B^-(t)&\le C_r(t)w_d(t)\le\overline f-B^+(t).\label{eq:actualguarantee}
\end{align}
For all $t\in I$, $(\alpha_n,w_d)$ remains in $\mathcal C$, and each required-thrust margin at the actual angles is at least its corresponding nominal barrier margin.
\end{theorem}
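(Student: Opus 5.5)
The proof splits into two parts: a scalar comparison argument for~\eqref{eq:invariance}, and a pointwise support-function bound for~\eqref{eq:actualguarantee}, built on the exact identity~\eqref{eq:transfer}.

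\emph{Step 1: regularity of the margins.} Since $A\in C^1(D)$ is nonsingular, $C=A^{-1}$ is $C^1$ on $D$. Because $Q_w\succ0$, each $\rho_i=\norm{Q_w^{1/2}c_i}_2$ is $C^1$ wherever $c_i\ne0$, and $c_i\ne0$ holds everywhere by nonsingularity. Hence $B_i^\pm(t)$ is a $C^1$ function of $(\alpha_n,w_d)$, composed with the locally absolutely continuous trajectories $\alpha_n$ and $w_d$. A $C^1$ map composed with an absolutely continuous curve is absolutely continuous, so each $t\mapsto B_i^\pm(t)$ is locally absolutely continuous on $I$. Its derivative almost everywhere is given by the chain rule. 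With $\dot\alpha_n=\nu$ a.e., this reproduces~\eqref{eq:bdot}, via~\eqref{eq:inversegradient}, \eqref{eq:Mb}, \eqref{eq:P} and~\eqref{eq:rhodot}, using the total derivative $\dot w_d$. Obtaining this a.e.\ chain rule cleanly is the main technical point; everything else is elementary.

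\emph{Step 2: comparison.} Hypothesis (iii) states that the barrier constraints of~\eqref{eq:qp} hold a.e., so $\dot B_i^\pm+\gamma B_i^\pm\ge0$ a.e. Set $\phi(t)=e^{\gamma(t-t_0)}B_i^\pm(t)$. This is a product of a smooth function and an absolutely continuous one, so it is locally absolutely continuous, with
\[
\dot\phi=e^{\gamma(t-t_0)}\bigl(\dot B_i^\pm+\gamma B_i^\pm\bigr)\ge0 \quad\text{a.e.}
\]
Integrating gives $\phi(t)\ge\phi(t_0)$, which is~\eqref{eq:invariance}. Combined with (i), this shows every $B_i^\pm(t)\ge0$. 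Since $\alpha_n(t)\in D$ is assumed, $(\alpha_n,w_d)$ remains in $\mathcal C$. Absolute continuity is what makes integration of the a.e.\ inequality legitimate; no Lipschitz property of the QP solution is needed, since only admissibility of the selected command is used.

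\emph{Step 3: transfer to actual angles.} Fix $t\in I$. By~\eqref{eq:transfer},
\[
(C_rw_d)_i=c_i^\top(w_d+\delta w_s)=f_{n,i}+c_i^\top\delta w_s .
\]
Hypothesis (ii) places $\delta w_s\in\Eset$, so~\eqref{eq:supportmaximum} gives $|c_i^\top\delta w_s|\le\rho_i$. Therefore
\[
f_{n,i}-\rho_i\le(C_rw_d)_i\le f_{n,i}+\rho_i .
\]
Substituting the definitions~\eqref{eq:barriers}, namely $f_{n,i}-\rho_i=\underline f_i+B_i^-$ and $f_{n,i}+\rho_i=\overline f_i-B_i^+$, yields~\eqref{eq:actualguarantee} componentwise. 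This bound is pointwise in $t$ and uses neither $\dot\alpha_r$ nor any relation between $\dot\alpha_r$ and $\nu$. Finally, Step 2 gives $B^\pm\ge0$, so $C_rw_d\in\Uset$, i.e.\ $w_d\in\Wset(\alpha_r)$ by~\eqref{eq:exactfeasible}. Each actual-configuration margin, $(C_rw_d)_i-\underline f_i$ and $\overline f_i-(C_rw_d)_i$, is then at least the corresponding nominal barrier $B_i^\mp$, as claimed.
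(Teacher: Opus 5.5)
Your proposal is correct and follows the paper's proof step for step: local absolute continuity of the margins, the integrating factor $e^{\gamma(t-t_0)}$ applied to $\dot B_i^\pm+\gamma B_i^\pm\ge0$ almost everywhere, and the support bound $|c_i^\top\delta w_s|\le\rho_i$ inserted into the identity~\eqref{eq:transfer}. Your Step~1 only spells out regularity that the paper asserts in one line: the rows $c_i$ never vanish, and the $C^1$ margins are Lipschitz on the compact image of each compact subinterval, which makes the composition absolutely continuous and justifies the a.e.\ chain rule.
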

\end{samepage}
\begin{proof}
The stated regularity and $Q_w\succ0$ make the margins locally absolutely continuous. Equations~\eqref{eq:qp} and~\eqref{eq:bdot} imply $\dot B_i^\pm+\gamma B_i^\pm\ge0$ almost everywhere. Multiplication by $e^{\gamma(t-t_0)}$ and integration yield~\eqref{eq:invariance}.

Since $\delta w_s\in\Eset$, the ellipsoid support gives $|c_i^\top\delta w_s|\le\rho_i$. By~\eqref{eq:transfer}, the actual required thrust satisfies
\begin{equation}
f_{n,i}-\rho_i\le (C_rw_d)_i\le f_{n,i}+\rho_i.
\end{equation}
Substituting $f_{n,i}-\rho_i=\underline f_i+B_i^-$ and $f_{n,i}+\rho_i=\overline f_i-B_i^+$ gives~\eqref{eq:actualguarantee}. Nonnegative margins therefore ensure actual-configuration feasibility.
\end{proof}
Bounded mismatch thus transfers nominal containment to the actual configuration without equating nominal and actual servo rates. The guarantee holds along an admissible solution of~\eqref{eq:qp}; recursive QP feasibility and instantaneous thrust realization are not conclusions of the theorem.

\subsection{Reserve Selection and Conditioning}
The uncertainty reserve must bound~\eqref{eq:mismatch} over the claimed operating domain. For a nonsingular wrench scaling $S_w$, suppose
\begin{align}
\norm{S_w(A_n-A_r)}_2&\le L_A\norm{\alpha_n-\alpha_r}_2,\nonumber\\
\sigma_{\min}(S_wA_r)&\ge\underline\sigma>0.
\end{align}
The exact factorization
\begin{equation*}
S_w\delta w_s=S_w(A_n-A_r)(S_wA_r)^{-1}S_ww_d
\end{equation*}
and submultiplicativity of the induced norm give
\begin{equation}
\norm{S_w\delta w_s}_2\le
\frac{L_A}{\underline\sigma}\norm{\alpha_n-\alpha_r}_2\norm{S_ww_d}_2.
\label{eq:boundconstruction}
\end{equation}
For bounds $\norm{\alpha_n-\alpha_r}_2\le\bar e_\alpha$ and $\norm{S_ww_d}_2\le\bar w$, a sufficient fixed reserve is
\begin{equation*}
Q_w=\beta^2S_w^{-1}S_w^{-\top},\qquad
\beta\ge\frac{L_A\bar e_\alpha\bar w}{\underline\sigma},\quad\beta>0.
\end{equation*}
The corresponding ellipsoid is the ball $\norm{S_w\delta}_2\le\beta$ in scaled wrench coordinates. Thus the reserve accounts for configuration mismatch, wrench magnitude, and actual-map conditioning.

A larger reserve is not always admissible. Adding the two margins gives the necessary condition
\begin{equation}
2\rho_i\le\overline f_i-\underline f_i.\label{eq:widthcondition}
\end{equation}
For the square invertible map, it is also sufficient for a nonempty tightened wrench set at a fixed configuration: choose $f_n$ at the midpoint of each thrust interval. It does not ensure feasibility for a prescribed $w_d$ or its rate. Whenever~\eqref{eq:widthcondition} holds for every channel, $\Delta f=\overline f-\underline f$ gives
\begin{equation}
\sigma_{\min}(Q_w^{-1/2}A_n)\ge\frac{2}{\norm{\Delta f}_2}.\label{eq:conditionbound}
\end{equation}
This follows from
\begin{equation*}
\begin{aligned}
\norm{C_nQ_w^{1/2}}_2
&\le\norm{C_nQ_w^{1/2}}_F\\
&=\Bigl(\sum_i\rho_i^2\Bigr)^{1/2}
\le\tfrac12\norm{\Delta f}_2.
\end{aligned}
\end{equation*}
Nonnegative robust margins thus impose a nominal conditioning bound; actual-map conditioning remains part of the operating-domain assumptions.

\section{Simulation Study}\label{sec:simulation}
\begin{figure*}[t]
\centering
\begin{tabular}{@{}cc@{}}
\multicolumn{1}{@{}l}{\small\color{navy}\textbf{(a) S1}\quad Circular tracking}&
\multicolumn{1}{l@{}}{\small\color{navy}\textbf{(b) S2}\quad Circle with disturbance}\\[2pt]
\multicolumn{1}{@{}l}{\footnotesize\color{navy!70}No external force}&
\multicolumn{1}{l@{}}{\footnotesize\color{disturbance!80!black}$d_F=[5,-3,4]^\top\sin(2\pi t)\ \mathrm{N}$}\\[4pt]
\includegraphics[width=.475\textwidth]{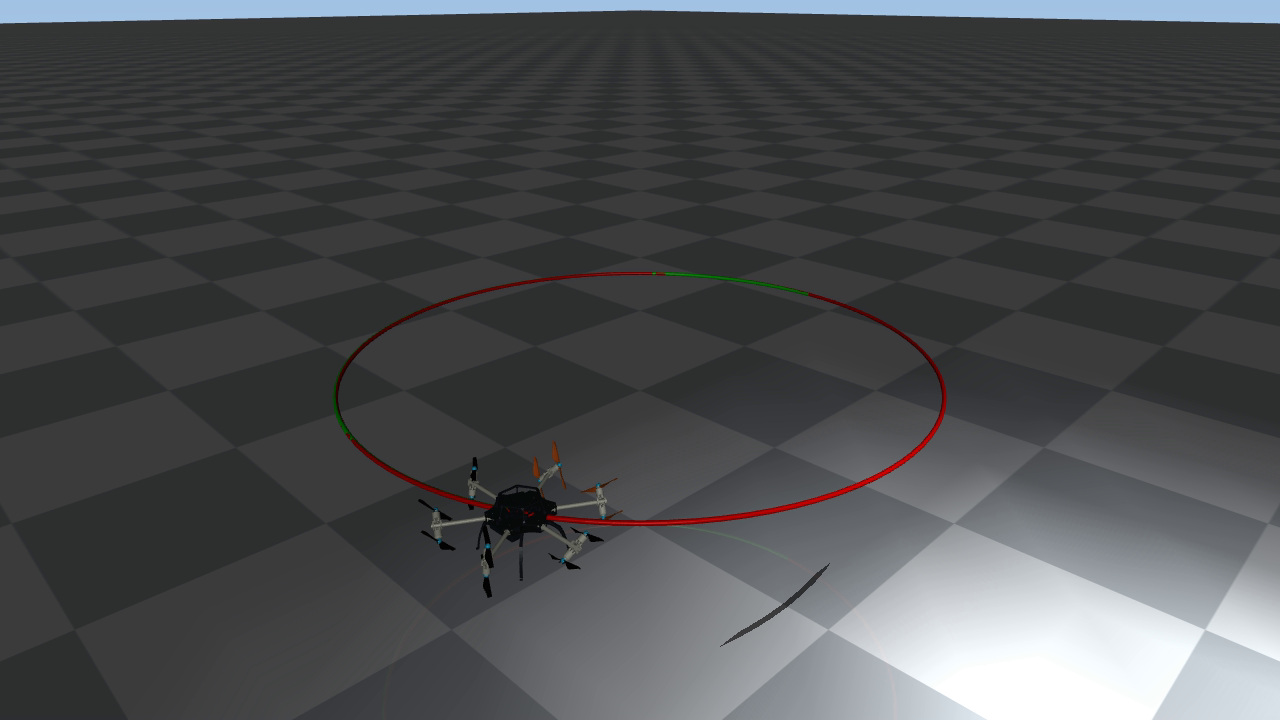}&
\begin{tikzpicture}[baseline=(scene.south)]
\node[anchor=south west,inner sep=0] (scene) at (0,0)
{\includegraphics[width=.475\textwidth]{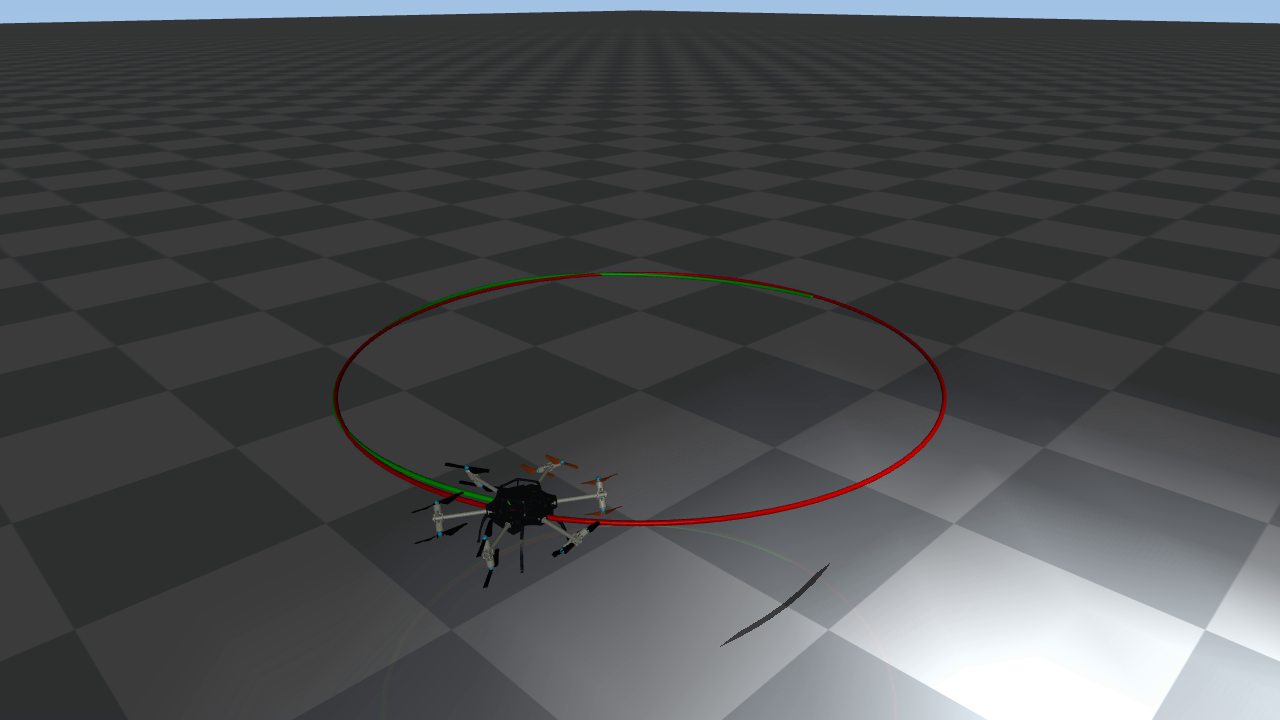}};
\begin{scope}[x={(scene.south east)},y={(scene.north west)}]
\draw[draw=white,line width=4.2pt,-{Latex[length=4.5mm,width=3.4mm]}]
(.180,.075) -- (.405,.300);
\draw[draw=disturbance,line width=2.2pt,-{Latex[length=4mm,width=2.8mm]}]
(.180,.075) -- (.405,.300);
\end{scope}
\end{tikzpicture}\\[-1pt]
\noalign{\vskip 7pt}
\multicolumn{1}{@{}l}{\small\color{navy}\textbf{(c) S3}\quad Cartwheel}&
\multicolumn{1}{l@{}}{\small\color{navy}\textbf{(d) S4}\quad Cartwheel with disturbance}\\[2pt]
\multicolumn{1}{@{}l}{\footnotesize\color{navy!70}No external force}&
\multicolumn{1}{l@{}}{\footnotesize\color{disturbance!80!black}$d_F=[6,0,0]^\top\sin(2\pi t)\ \mathrm{N}$}\\[4pt]
\includegraphics[width=.475\textwidth]{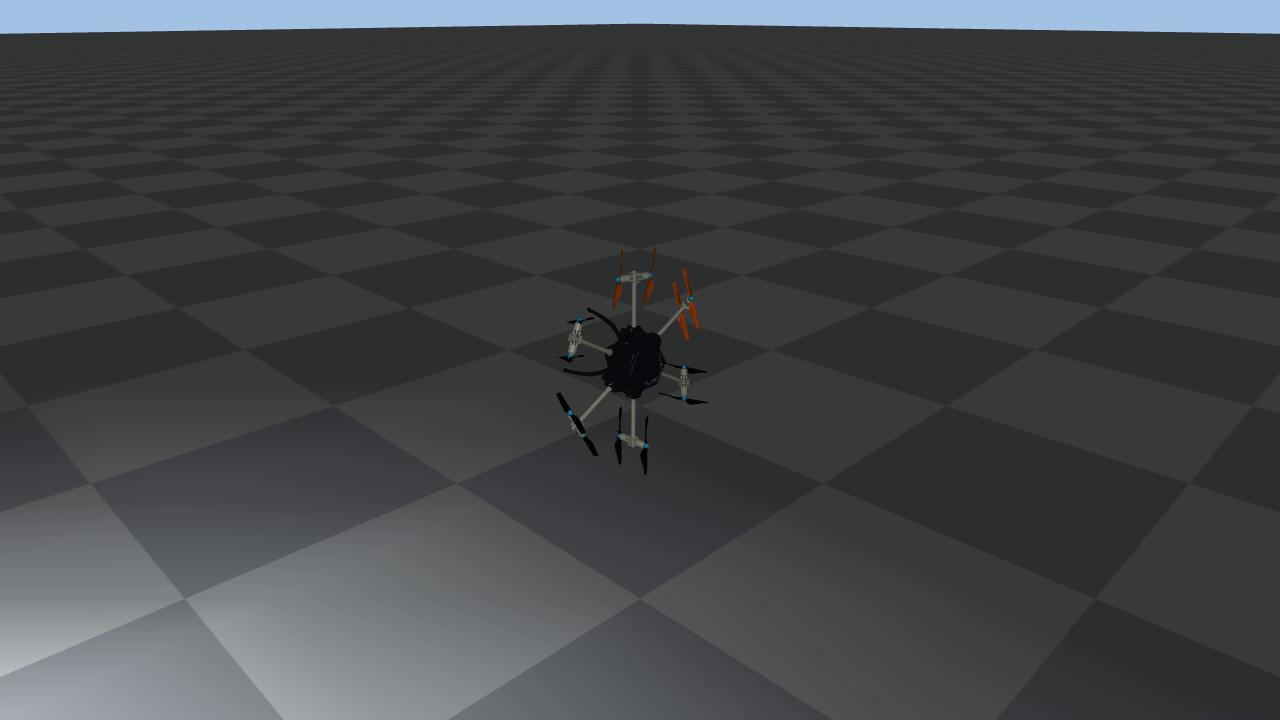}&
\begin{tikzpicture}[baseline=(scene.south)]
\node[anchor=south west,inner sep=0] (scene) at (0,0)
{\includegraphics[width=.475\textwidth]{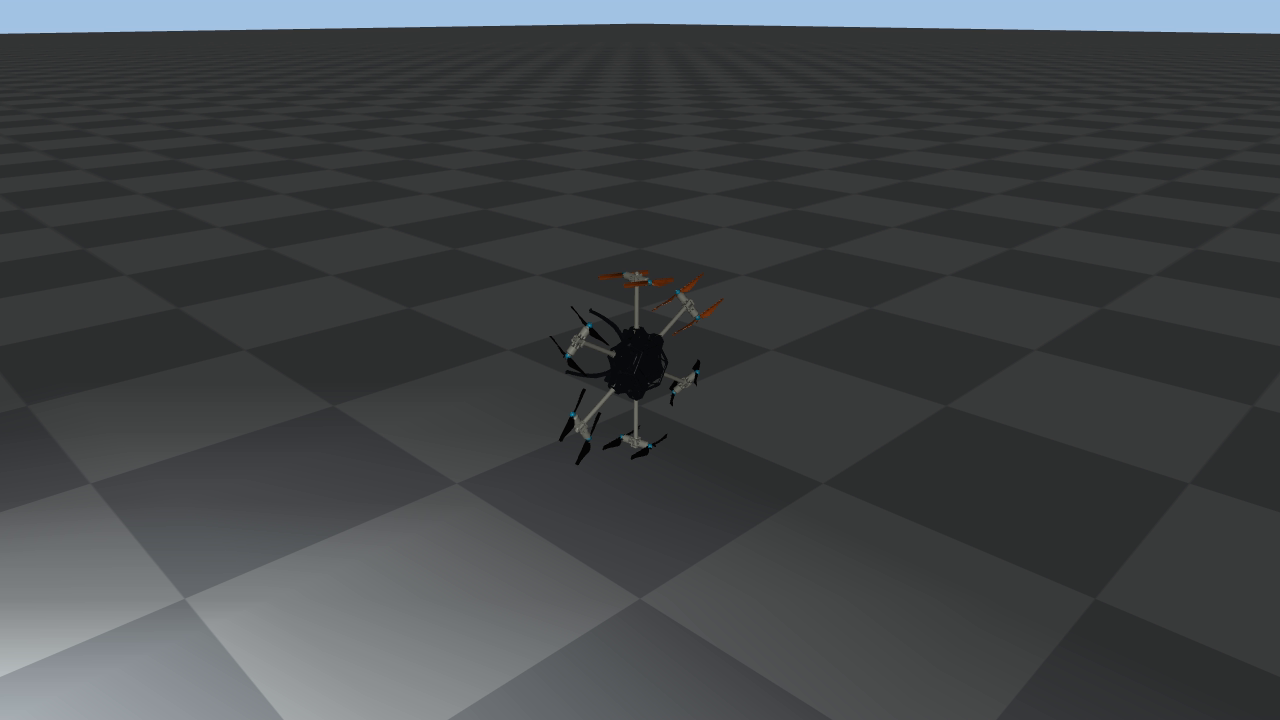}};
\begin{scope}[x={(scene.south east)},y={(scene.north west)}]
\draw[draw=white,line width=4.2pt,-{Latex[length=4.5mm,width=3.4mm]}]
(.165,.496) -- (.495,.496);
\draw[draw=disturbance,line width=2.2pt,-{Latex[length=4mm,width=2.8mm]}]
(.165,.496) -- (.495,.496);
\end{scope}
\end{tikzpicture}
\end{tabular}

\caption{Recorded MuJoCo scenes for the measured-angle implementation of the proposed CBF-based CA. Scenario labels specify world-frame external forces. Orange arrows point toward the body and indicate the positive half-cycle schematically; their directions reverse when $\sin(2\pi t)<0$.}
\label{fig:scenes}
\end{figure*}

\subsection{Setup and Scenarios}
The study uses an actively tilting vehicle with six servo joints and twelve propellers in MuJoCo~\cite{todorov2012}. Each pair of propellers shares a tilt angle and an identical unit-thrust wrench column, so its two thrusts form one effective channel in~\eqref{eq:allocation}. The recorded per-propeller command range is $0$--$8$ N, giving $0\le f_i\le16$ N for each pair. The setup includes servo and thrust dynamics and a PID outer controller with a nonlinear robust internal-loop compensator (NRIC)~\cite{kim2015}. The explicitly recorded outer-controller settings are common to all four methods and are summarized in Table~\ref{tab:settings}.

Results labeled Proposed CBF-based CA use the measured-angle prototype of the proposed allocator. The formulation in Sections~\ref{sec:proposed}--\ref{sec:guarantee} additionally requires a persistent nominal trajectory and an explicit configuration-mismatch reserve. Accordingly, this study evaluates tracking and actual-angle wrench feasibility; direct validation of the nominal-trajectory certificate requires further records.

\begin{table}[t]
\caption{Recorded simulation scenarios}
\label{tab:scenarios}
\centering\footnotesize
\renewcommand{\arraystretch}{1.18}
\begin{tabularx}{\columnwidth}{@{}l>{\raggedright\arraybackslash}X>{\raggedright\arraybackslash}X@{}}
\toprule
Case & Reference motion & World-frame force\\
\midrule
S1 & Circle: $r=1.5$ m, $T_c=3.0$ s & None\\
S2 & Same circle & $[5,-3,4]^\top\sin(2\pi t)$ N\\
S3 & Cartwheel: $90^\circ$ pitch, then $2.1$ rad/s body yaw & None\\
S4 & Same cartwheel & $[6,0,0]^\top\sin(2\pi t)$ N\\
\bottomrule
\end{tabularx}
\end{table}

\begin{table}[t]
\caption{Common settings preserved in the run records}
\label{tab:settings}
\centering\footnotesize
\renewcommand{\arraystretch}{1.12}
\begin{tabularx}{\columnwidth}{@{}l>{\raggedright\arraybackslash}X@{}}
\toprule
Quantity & Recorded value\\
\midrule
Actuator groups & 6 tilt joints, 12 propellers\\
Thrust-command limits & $0$--$8$ N per propeller; $0$--$16$ N per pair\\
Attitude gains & $K_R=\diag(10,10,6)$;\newline $K_\Omega=\diag(2.8,2.8,1.8)$;\newline integral gain zero\\
NRIC gain settings & $(k_p,k_d,k_i)=(300,30,3)$\\
NRIC filter time constant & $0.10$ s\\
Run length / logging interval & $20$ s / $8$ ms (2500 samples)\\
Cartwheel servo position limit & Disabled for continuous rotation\\
\bottomrule
\end{tabularx}
\end{table}

\begin{table}[!tb]
\caption{Tracking performance across the four scenarios}
\label{tab:results}
\centering\small
\renewcommand{\arraystretch}{1.12}
\begin{tabular}{@{}lrrrr@{}}
\toprule
Case & GA & ADA & NDA & \shortstack{Proposed\\CBF-based CA}\\
\midrule
\multicolumn{5}{c}{Position RMSE $E_p$ (m)}\\
S1 & 0.081 & 0.028 & 0.023 & \textbf{0.006}\\
S2 & 30.474 & 0.057 & 0.054 & \textbf{0.026}\\
S3 & 8.601 & 0.070 & 0.084 & \textbf{0.025}\\
S4 & 16.024 & 1.045 & 0.786 & \textbf{0.028}\\
\midrule
\multicolumn{5}{c}{Attitude RMSE $E_R$ ($^\circ$)}\\
S1 & 0.06 & \textbf{0.04} & \textbf{0.04} & 0.21\\
S2 & 128.27 & 0.09 & \textbf{0.07} & 0.33\\
S3 & 101.72 & 1.53 & 2.01 & \textbf{1.15}\\
S4 & 116.57 & 50.01 & 30.37 & \textbf{1.16}\\
\bottomrule
\end{tabular}
\par\smallskip
\begin{minipage}{.97\columnwidth}\footnotesize
Each entry uses all 2500 stored samples over 20 s, including the initial transient. One run is available per method and scenario. Lowest displayed values are bold.
\end{minipage}
\end{table}

Table~\ref{tab:scenarios} defines the four scenarios, and Fig.~\ref{fig:scenes} shows their representative scenes. For the cartwheel, the command specifies a 2 s takeoff, 1 s hold, 2 s pitch transition, and 1 s post-pitch hold. Continuous body yaw begins at $t=6$ s. Servo position limits are disabled for this continuous rotation. The circular reference has speed $3.14$ m/s and acceleration $6.58$ m/s$^2$ on its constant-speed segment. Each run contains 20 s of telemetry at 8 ms intervals; this is the logging interval, not the controller period. The preserved records support reproducible telemetry analysis but lack complete allocator gains, null-space targets, and actuator-model settings for a full simulation rerun.

Both maneuvers vary the body-frame wrench demand. The 1 Hz external forces in S2 and S4 act through the vehicle dynamics and feedback command; they do not define the configuration-mismatch ellipsoid in~\eqref{eq:ellipsoid}.

\subsection{Tracking Comparison}
Let $e_{p,k}$ denote the position error and $e_{\log,k}$ the recorded $\SO$ logarithmic attitude-error vector in radians at sample $k$. Table~\ref{tab:results} reports
\begin{equation}
\begin{aligned}
E_p&=\left(\frac{1}{N}\sum_{k=1}^N\norm{e_{p,k}}_2^2\right)^{1/2},\\
E_R&=\frac{180}{\pi}\left(\frac{1}{N}\sum_{k=1}^N\norm{e_{\log,k}}_2^2\right)^{1/2},
\end{aligned}
\label{eq:rmse}
\end{equation}
with $N=2500$, $E_p$ in meters, and $E_R$ in degrees. The logarithmic error used for evaluation differs from the geometric controller error $e_R$ in Appendix~\ref{app:derivative}. Recalculation from all stored samples reproduces the run summaries.

In S1 and S2, the proposed allocator achieves position RMSEs of $0.006$ m and $0.026$ m, respectively. ADA and NDA track both circles, whereas GA loses tracking in S2. Relative to ADA and NDA, the proposed method has higher attitude RMSE in both cases; these runs do not isolate how allocator settings affect this position--attitude tradeoff.

In the undisturbed cartwheel, the proposed allocator improves both position and attitude tracking relative to ADA and NDA. Under disturbance, its errors remain close to the undisturbed case: $0.028$ m and $1.16^\circ$ in S4, compared with $0.025$ m and $1.15^\circ$ in S3. The S4 attitude errors of ADA and NDA rise to $50.01^\circ$ and $30.37^\circ$, respectively. We next compare desired-wrench feasibility at the measured servo configurations.

\subsection{Disturbed-Cartwheel Wrench Feasibility}
Measured servo angles reconstruct $A_r(t)$, giving the required pair thrust $f^{\rm req}=C_rw_d$. The geometry, recovered from the recorded thrust--wrench relation, reproduces all sixteen runs' generated wrenches within $10^{-6}$ N per force component and $10^{-6}$ N\,m per torque component. This check uses actual thrust independently of the desired-wrench feasibility test.

Let $c_{r,i}^\top$ be row $i$ of $C_r$ and $D_\ell=\diag(I_3,\ell I_3)$, with $\ell=0.30$ m for all methods. Define the signed facet margin
\begin{equation}
\begin{aligned}
m_i(t)&=\min\{f_i^{\rm req}-\underline f_i,\;\overline f_i-f_i^{\rm req}\},\\
\mu_w(t)&=\min_i\frac{m_i(t)}{\norm{c_{r,i}^{\top}D_\ell}_2}.
\end{aligned}
\label{eq:loggedmargin}
\end{equation}
The coordinates $D_\ell^{-1}w$ express force and moment in force-equivalent units. The normalization preserves the membership test: $\mu_w\ge0$ if and only if $w_d\in\Wset(\alpha_r)$. For a feasible request, $\mu_w$ is the radius of the largest Euclidean ball centered at $D_\ell^{-1}w_d$ inside the scaled wrench set. Its magnitude depends on the chosen force--moment scaling, while its sign does not. Negative values indicate a violated facet; their magnitude is not the distance to the set.

\begin{figure*}[!p]
\centering
\includegraphics[width=\textwidth]{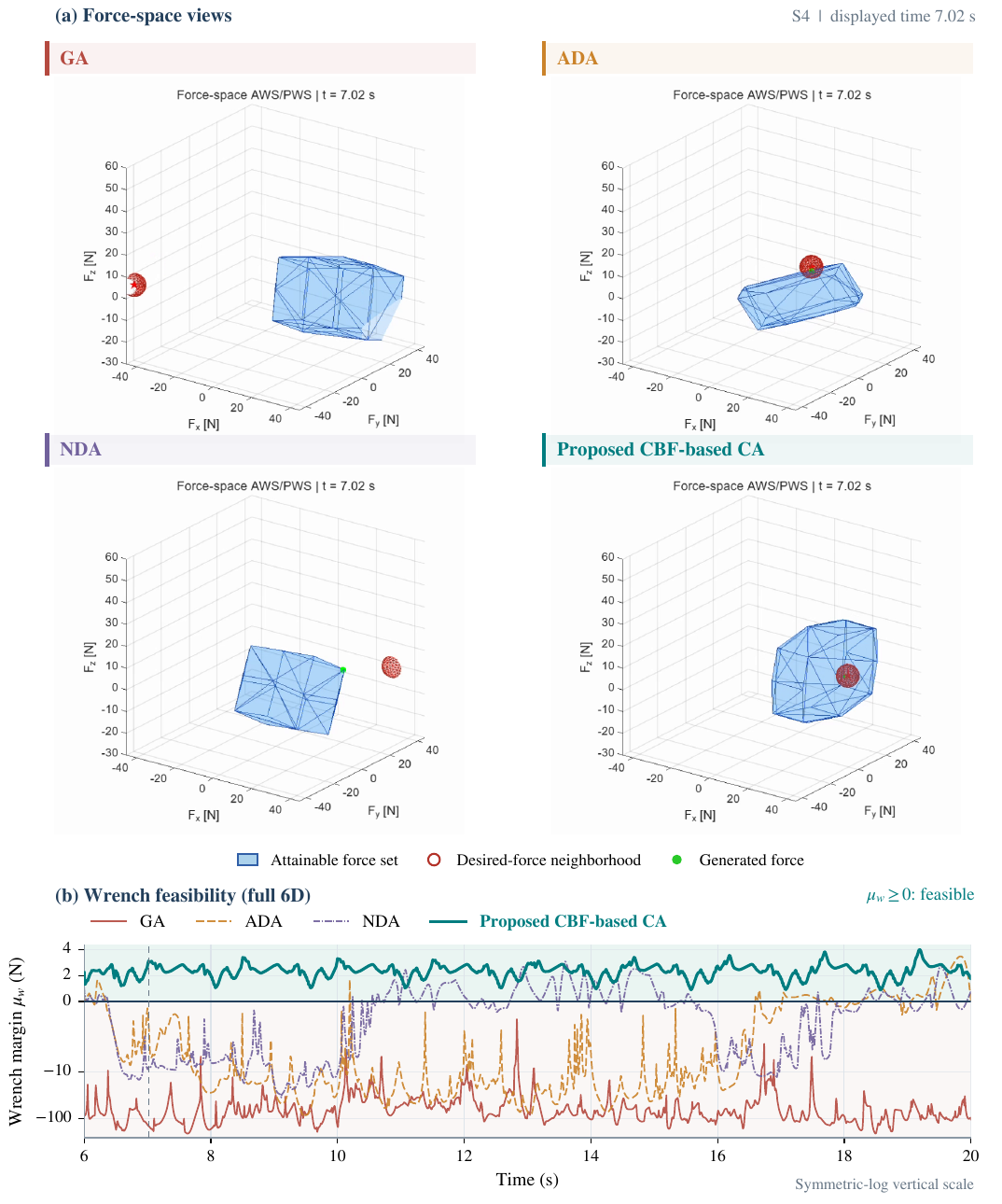}
\caption{Disturbed cartwheel (S4). (a) Original video frames at the displayed time $7.02$ s, including the measured-angle implementation of the proposed CBF-based CA. The legend identifies the native force-space objects; the red neighborhood is not a certificate of the prescribed $\Eset$. (b) Full six-dimensional margin~\eqref{eq:loggedmargin} over continuous rotation: nonnegative values indicate feasibility. The vertical dashed line marks the displayed frame time. All samples in this interval are plotted without smoothing on a symmetric-log scale.}
\label{fig:wrench_feasibility}
\end{figure*} 

Figure~\ref{fig:wrench_feasibility}(a) compares all four methods at the same displayed time. The baseline views place the desired-force neighborhood outside or across the attainable-set boundary; the proposed allocator's request lies inside. These force-space views are qualitative. Quantitative membership is evaluated in panel~(b) using the full force--moment request.

Over $6\le t\le20$ s, all 1751 proposed-method samples are feasible ($\min\mu_w=0.89$ N; minimum pair-thrust margin $0.96$ N). Violation rates are $100\%$ (GA), $80.7\%$ (ADA), and $60.8\%$ (NDA). These are sampled membership results, not an intersample guarantee.

\clearpage
\section{Discussion}\label{sec:discussion}
\subsection{Geometric Feasibility and Actuator Response}
The sampled test in Section~\ref{sec:simulation} checks whether bounded thrusts can realize the request at the measured configuration. Motor response and estimator errors still determine tracking accuracy. An attainable wrench can therefore coexist with transient error, whereas thrust redistribution alone cannot repair a wrench-set violation at fixed angles.

For a chosen ellipsoid and square nonsingular map, the support-based tightening is exact. Conservatism enters through the uncertainty bound: a domain-wide reserve may exceed the mismatch in a particular maneuver. A smaller reserve frees thrust authority but must still cover the equivalent configuration error; tracking quality alone does not establish that coverage.

Servo motion supplies differential allocation freedom, while the fixed-configuration map remains square. For more than six independent thrust channels, the containment test must account for internal thrust redundancy; substituting a pseudoinverse into the same inequalities does not preserve their equivalence.

\subsection{Design and Validation Priorities}
The servo-rate box bounds the nominal command rate, while the barrier inequalities constrain its effect on the moving wrench set. The weights $R_\nu,R_f$ select among feasible motions; the reserve and the boundary-approach gain determine the constraints described in Section~\ref{sec:proposed}. Tuning the cost cannot compensate for a request whose barrier constraints admit no servo motion.

A direct validation of the nominal formulation should record its integrated $\alpha_n$, measured $\alpha_r$, and prescribed $Q_w$ alongside the wrench. The minimum tightened margin and the coverage measure $\delta w_s^\top Q_w^{-1}\delta w_s$ then test nominal containment and reserve coverage at the stored samples. Substituting measured angles for $\alpha_n$ would erase the mismatch being tested. Repeated runs, identified actuator dynamics, and hardware trials can subsequently establish the operating envelope.

\section{Conclusion}\label{sec:conclusion}
This paper developed a servo-rate CBF allocator for preserving desired-wrench feasibility during tiltrotor reconfiguration. An exact nominal-to-actual required-thrust identity and ellipsoid support reduce robust containment to twelve margins. Their complete derivatives yield a six-variable QP and a conditional forward-invariance guarantee, with differential thrust allocation supplying the tracking stage. A complementary measured-angle CBF study maintains feasibility at every stored continuous-rotation sample of a disturbed cartwheel and achieves lower tracking error than GA, ADA, and NDA in that case. Circular tracking exhibits an attitude tradeoff. Future validation will assess the persistent nominal trajectory and its mismatch reserve, followed by hardware evaluation with identified actuator dynamics.

\newpage
\appendices
\section{Analytical Desired-Wrench Derivatives}\label{app:derivative}
The QP uses the total derivative of the implemented desired-wrench command $w_d$. The following expressions illustrate its evaluation for a translational PID law and a geometric attitude controller~\cite{lee2010}.

\subsection{Translational PID Wrench in Body Coordinates}
Define $e_p=p_d-p$, $e_v=v_d-v$, and $\dot\eta_p=e_p$. With constant gains and inertial compensation force $F_c^I$, let
\begin{equation}
F_d^I=m(a_d+K_pe_p+K_ve_v+K_I\eta_p+ge_3)+F_c^I.
\end{equation}
Then
\begin{equation}
\dot F_d^I=m\{j_d+K_pe_v+K_v(a_d-\dot v)+K_Ie_p\}
+\dot F_c^I.\label{eq:forceIderivative}
\end{equation}
Since the allocator uses body-frame wrench,
\begin{equation}
F_d=R^\top F_d^I,\qquad
\dot F_d=-\sk\Omega F_d+R^\top\dot F_d^I.
\label{eq:forceBderivative}
\end{equation}
The rotating-frame term is required during body reorientation.

The compensation derivative $\dot F_c^I$ is included when an NRIC or another additive compensation block is used.

\subsection{Geometric Attitude Controller}
Define
\begin{align}
Q&=R^\top R_d,\qquad
e_R=\tfrac12(R_d^\top R-R^\top R_d)^\vee,\\
e_\Omega&=\Omega-Q\Omega_d,
\qquad E(Q)=\tfrac12\{\operatorname{tr}(Q)I-Q\}.
\end{align}
The error derivatives satisfy
\begin{equation}
\dot e_R=E(Q)e_\Omega,\qquad
\dot Q=-\sk\Omega Q+Q\sk{\Omega_d}.
\end{equation}
To express the torque derivative compactly, let
\begin{align}
a_\Omega&=\dot\Omega,\quad a_{\Omega,d}=\dot\Omega_d,
\quad j_{\Omega,d}=\dot a_{\Omega,d},\nonumber\\
v_R&=Q\Omega_d,\qquad q_R=Qa_{\Omega,d},\qquad
b_R=\Omega\times v_R-q_R.
\end{align}
Then
\begin{align}
\dot v_R&=-\Omega\times v_R+q_R,\label{eq:vdotR}\\
\dot q_R&=-\Omega\times q_R+
Q(\Omega_d\times a_{\Omega,d}+j_{\Omega,d}),\\
\dot e_\Omega&=a_\Omega-\dot v_R,\\
\dot b_R&=a_\Omega\times v_R+\Omega\times\dot v_R-\dot q_R.
\end{align}
For a geometric torque controller with additive compensation $\tau_c$,
\begin{equation}
\tau_d=-K_Re_R-K_\Omega e_\Omega+
\Omega\times J_b\Omega-J_bb_R+\tau_c,
\end{equation}
the full derivative is
\begin{align}
\dot\tau_d={}&-K_RE(Q)e_\Omega-K_\Omega(a_\Omega-\dot v_R)\nonumber\\
&+a_\Omega\times J_b\Omega+\Omega\times J_ba_\Omega
-J_b\dot b_R+\dot\tau_c.\label{eq:taudot}
\end{align}
Thus $\dot w_d=[\dot F_d^\top,\dot\tau_d^\top]^\top$. If an attitude integral term is used, its derivative must be added consistently. Body angular acceleration can be obtained from the dynamics or an estimator, but its uncertainty cannot be ignored in an exact-derivative claim.

\bibliographystyle{IEEEtran}
\bibliography{references}
\end{document}